\newcommand{\bI}{\text{I}}
\newcommand{\dd}{\mbox{d}}
\newtheorem{theorem}{Theorem}
\newtheorem{lemma}{Lemma}
\newcommand{\Var}{\mbox{Var}}
\newcommand{\wht}{\mbox{$\widehat{\theta}$}}
\newcommand{\whtl}{\mbox{$\widehat{\theta}^{(l)}$}}
\newcommand{\whel}{\mbox{$\widehat{\eta}^{(l)}$}}
\newcommand{\whtr}{\mbox{$\widehat{\theta}^{(r)}$}}
\newcommand{\swhtl}{\mbox{$\sigma_{\hat{\theta}^{(l)}}^2$}}
\newcommand{\mathleft}{\@fleqntrue\@mathmargin0pt}
\newcommand{\mathcenter}{\@fleqnfalse}
\begin{document}
\begin{center}
{\bf \large
Deep Ensembles from a Bayesian Perspective}

\vspace{1.0cm}
Lara Hoffmann$^{1,*}$ \& Clemens Elster$^1$ \\

$^1$Physikalisch-Technische Bundesanstalt, Braunschweig and Berlin, Germany \\

$^*$e-mail: lara.hoffmann@ptb.de \\

\vspace{0.2cm}
\today
\end{center}

\begin{abstract}
Deep ensembles can be considered as the current state-of-the-art for uncertainty quantification in deep learning.
While the approach was originally proposed as a non-Bayesian technique, arguments supporting its Bayesian footing have been put forward as well.
We show that deep ensembles can be viewed as an approximate Bayesian method by specifying the corresponding assumptions.
Our findings lead to an improved approximation which results in an enlarged epistemic part of the uncertainty.
Numerical examples suggest that the improved approximation can lead to more reliable uncertainties. Analytical derivations ensure easy calculation of results.
\end{abstract}

\section{Introduction}
Deep learning has been successfully applied to numerous tasks due to its ability to learn complex relationships from data. The black box character of the employed deep neural networks, however, is a major drawback. 
Thorough testing \cite{sun2018,tian2018} and explainable AI \cite{samek2019,baldassarre2019} have been proposed to overcome this issue. Uncertainty quantification \cite{levi2019,yao2019} is another approach that can be viewed in this context. However, quantifying the uncertainty associated with the predictions of a trained network has its own merit and is particularly relevant when decisions are taken in sensitive applications such as medical diagnosis \cite{leibig2017} or autonomous driving \cite{michelmore2020}.

Several methods have been proposed to quantify the uncertainty associated with the prediction of a deep neural network \cite{abdar2020, jospin2020}.
Currently, deep ensembles \cite{lakshminarayanan2017} is one of the most common methods.
Different studies have shown that deep ensembles often outperform other methods \cite{caldeira2020, gustafsson2020, ovadia2019, scalia2020}. Furthermore, ensemble techniques scale well to high-dimensional problems and are easy to implement \cite{gustafsson2020, hoffmann2021}. These techniques can thus be seen as state-of-the-art for uncertainty quantification in deep learning.

Deep ensembles are based on a combination of multiple networks, an idea that dates back to the $20$th century. In \cite{hansen1990}, better accuracies were achieved for an ensemble of networks than when employing a single network; the use of multiple networks to account for model uncertainty has been proposed in \cite{carney1999}.  Ensemble learning has gained in popularity ever since \cite{sagi2018}.
A breakthrough in the application of ensembling for deep learning was the introduction of deep ensembles \cite{lakshminarayanan2017} in $2017$, where an ensemble of independently trained networks is treated as a uniformly weighted Gaussian mixture model.
In \cite{ashukha2020}, deep ensembles are even used to introduce a metric to measure the performance of other ensembling methods by counting the number of networks needed to reach a specified performance.

Uncertainty quantification by deep ensembles has been proposed without referring to a Bayesian background and has even been explicitly classified as a non-Bayesian approach in \cite{lakshminarayanan2017,ovadia2019,hu2020}.
Several ideas have been put forward to embed deep ensembles into a Bayesian framework. Examples comprise early stopping \cite{duvenaud2016}, regularizing parameters around different values drawn from a prior distribution \cite{pearce2020} and creating a link to Gaussian processes via the neural tangent kernel \cite{he2020}. In \cite{wilson2020}, deep ensembles are associated to Bayesian model averaging \cite{hoeting1999}, while in \cite{gustafsson2020}, they are represented as an approach that builds on an approximation of a multimodal posterior for the network parameters. 

In following this latter perspective, we argue that deep ensembles provide an approximate Bayesian inference in which the true posterior is replaced with an average of delta distributions around local maximum a posterior (MAP) estimates for the parameters of the network.
We specify the statistical model and prior distributions required to achieve this result. By using an average of delta distributions as an approximation of the posterior, the formulas applied in deep ensembles \cite{lakshminarayanan2017} follow directly.
To improve this approximation, we propose replacing the delta distributions with Gaussian distributions, which is a well-known technique from Laplace approximation \cite{ritter2018}.
A similar idea is presented in \cite{wilson2020}, where the approximated posterior distribution is generalized to a mixture of Gaussians which are independently trained as stochastic-weight-averaging Gaussians (SWAG) \cite{maddox2019}. The approximation of the posterior by a family of Gaussian distributions is well known from Bayesian neural networks \cite{lampinen2001} and variational inference in general \cite{blei2017}.

By applying variational inference, we maximize the ELBO to determine the additional parameters of the Gaussian mixture distribution. However, rather than maximizing the ELBO with respect to all parameters during network training, we propose a two-step approach. First, conventional training of the ensemble of networks yields local MAPs of the network parameters, which are taken as the locations of the Gaussian mixture distribution. In the second step, the remaining parameters are determined by maximizing the ELBO. In this way, training of the deep ensembles does not have to be altered, and our improved approximation of the posterior can be obtained via post-processing.

Previous work suggests that model uncertainty can be captured using only a few or even just one Bayesian layer at the end of the network \cite{zeng2018,brosse2020,kristiadi2020}. Therefore, we also consider only the weights of the last layer in the neural networks to be random, and do not apply any nonlinear transformations afterwards. This allows an analytical solution for the maximization of the ELBO to be derived. As a consequence, the proposed variational inference can be easily applied by simple post-processing of the results obtained by conventional deep ensembles.

The improved approximation of the posterior leads to a modification of the formulas used to perform uncertainty quantification by deep ensembles. Specifically, the epistemic part of the uncertainty is enlarged by an additional contribution. We argue that, in the context of regression problems, the epistemic part of the uncertainty can be the relevant part when the goal is to infer the regression function, and the proposed modification is then necessary to arrive at a reliable uncertainty quantification.

Our contribution is the following. We specify the statistical model, the prior distributions and the form of approximation required such that uncertainty quantification by deep ensembles \cite{lakshminarayanan2017} can be viewed as an approximate Bayesian method. This approximation is then improved by extending the family of distributions used to approximate the posterior from delta distributions to Gaussian distributions. The ELBO is maximized to determine the additional parameters, leading to a simple post-processing procedure of conventional deep ensembles. By using numerical examples, we demonstrate the impact of the proposed modified uncertainty quantification and compare its results with those obtained by conventional deep ensembles. We argue that the increased epistemic part of the uncertainty is relevant in those regression tasks where the goal is to infer the regression function.
Analytical derivations are given which allow for analytical calculations of the proposed procedure.

The paper is organized as follows. In Section 2, the conditions are specified under which uncertainty quantification by deep ensembles can be viewed as an approximate Bayesian method. The improvement of this approximation through a family of Gaussian mixture distributions is then considered in Section 3, including an analytical derivation of the additional variational parameters. In Section 4, numerical examples are presented that explore the impact of the proposed approach in the context of regression problems. Finally, conclusions are drawn in Section 5.

\section{Deep ensembles as a Bayesian approximation}
In this work we focus on regression problems. The assumed heteroscedastic regression model is given as
\begin{equation}\label{eq1.0}
y|x \sim N \left( \eta_\theta(x), \sigma_\theta^2(x) \bI \right) \,,  x\in\mathbb{R}^{p_x},\, y \in\mathbb{R}^{p_y}\,,
\end{equation}
where the conditional distribution for $y$ given $x$ is taken as a Gaussian distribution with mean $\eta_\theta (x)$ and covariance matrix
$\sigma_\theta^2(x)\bI$. Both $\eta_\theta (x)$ and $\sigma_\theta^2(x)$ are modeled by deep neural networks. The mean $\eta_\theta (x)$, viewed as a function of $x$, will be termed the regression function, and we are particularly interested in its inference. The available training data $(x_i,y_i), i=1, \ldots, N$, shall follow model \eqref{eq1.0} and are denoted by $D$.

In a Bayesian framework \cite{gelman2013,robert2007,ohagan2004}, the posterior distribution $\pi(\theta|D)$ is determined (or approximated) and used for inference. For example, as an estimate of the regression function $\eta_\theta (x)$ and its associated uncertainty one could take the posterior mean and posterior standard deviation of $\eta_\theta (x)$. If one is interested in predicting a future observation $y$ at a specified $x$, on the other hand, the posterior predictive distribution $\pi(y|x,D)$ should be considered. The posterior predictive distribution accounts for both the uncertainty in the parameters $\theta$ and the additional uncertainty of $y$ expressed by the covariance matrix $\sigma_\theta^2(x)\bI$.

The total uncertainty associated with the posterior predictive distribution of the neural network predictions can be divided into an aleatoric part and an epistemic part \cite{gal2016thesis,hullermeier2021}. While the former is related to the irreducible part of the uncertainty, the epistemic uncertainty characterizes the uncertainty about the model (i.e. the model parameters $\theta$ in our setting). The epistemic uncertainty can be reduced by increasing the size of the available training data. The aleatoric uncertainty is represented in our model \eqref{eq1.0} by the covariance matrix $\sigma_\theta^2(x)\bI$ and characterizes the uncertainty about the (future) realization of an observation $y$ given its mean $\eta_\theta (x)$.

In the deep ensembles method \cite{lakshminarayanan2017}, an ensemble of individually trained networks is considered. To predict an observation $y$, the mean of the  predictions of the single networks is formed. The (squared) uncertainty associated with this prediction is taken as the sum of the covariance matrix of the individual predictions and the estimated aleatoric part of the prediction (i.e. $\sigma_{\widehat{\theta}}^2(x)\bI$ in our model \eqref{eq1.0}).


Deep ensembles were originally introduced as a non-Bayesian method \cite{lakshminarayanan2017}. However, it has been repeatedly pointed out that deep ensembles can be viewed from a Bayesian perspective \cite{abdar2020,gustafsson2020,wilson2020}.
More precisely, conventional deep ensembles, as defined in \cite{lakshminarayanan2017}, can be viewed as an approximate Bayesian inference where the posterior $\pi(\theta|D)$ of the network parameters $\theta$ given the data $D$ is approximated by a family of delta distributions, i.e.
\begin{equation}\label{eq1.1}
\pi(\theta|D) \approx q(\theta) := \frac{1}{L} \sum_{l=1}^L \delta(\theta - \whtl)\, ,
\end{equation}
while choosing the prior $\pi (\theta)$ for the network parameters normally distributed as
\begin{equation}\label{eq0.0}
\theta \sim N \left(0, \lambda^{-1} \bI \right)\, ,
\end{equation}
where $\lambda$ is the $L2$ regularization parameter for the network parameters.

To show this, let the data $D=\{(x_i,y_i), i= 1, \ldots, N \}$ be iid samples following the regression model in \eqref{eq1.0}.
Each neural network of the ensemble is independently trained according to \cite{lakshminarayanan2017} by minimizing the loss function
\begin{equation}
loss = -L(\theta;D) + \frac{1}{2} \lambda \| \theta \|^2 \,,\
L(\theta;D) = - \frac{1}{2} \sum_{i=1}^N \left(
\frac{ \| y_i-\eta_\theta(x_i) \|^2}{\sigma_\theta^2(x_i)} + p_y \log(\sigma_\theta^2(x_i)) \right)\, ,
\end{equation}
where $\Vert . \Vert$ is the $L^2$ norm, and $L(\theta;D)$ equals (up to a constant) the log likelihood for the statistical model in \eqref{eq1.0}.
The parameter estimates $\whtl, l=1, \ldots, L$, obtained for the ensemble of $L$ networks, are at the same time (local) estimates for the maximum a posteriori probability (MAP) of $\theta$ when assigning the prior \eqref{eq0.0}.
By taking the average of $L$ probability mass functions at the local MAPs as an approximation of the posterior $\pi(\theta|D)$
(cf. \eqref{eq1.1}), 
the approximation of the posterior predictive distribution for the statistical model in \eqref{eq1.0} is then given by
\begin{equation}\label{eq1.1b}
\pi(y|x,D) = \int \pi(\theta|D) N \left( y; \eta_\theta(x),\sigma_\theta^2(x) \bI \right) \dd \theta 
 \approx \frac{1}{L} \sum_{l=1}^L N \left( y; \eta_{\widehat{\theta}^{(l)}}(x),\sigma_{\widehat{\theta}^{(l)}}^2(x) \bI \right) \,,
\end{equation}
which equals the expression given in \cite{lakshminarayanan2017} for the deep ensembles.

The average over delta distributions taken at point estimates \eqref{eq1.1} is a crude approximation of the posterior.
Nonetheless, deep ensembles outperform many other Bayesian approaches due to their ability to explore different modes of the posterior distribution \cite{fort2019}. Figure \ref{fig solution space}, which is inspired by \cite{fort2019} (Fig. 1) and \cite{wilson2020} (Fig. 3), illustrates this behavior.
\begin{figure}[t]
    \centering
    \includegraphics[scale = 0.6]{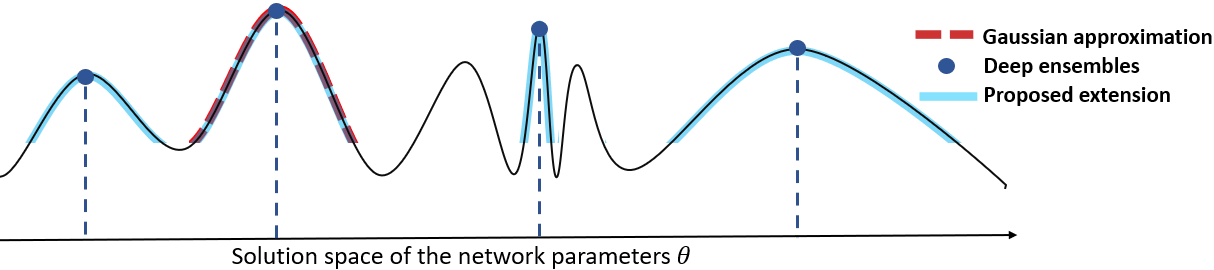}
    \caption{This example figure illustrates the solution space of the network parameters on the $x$-axis and the negative loss function on the $y$-axis. The deep ensembles explore multiple modes in comparison to other Bayesian approaches which focus on a single mode but consider multiple moments. The proposed extension combines these two ideas.}\label{fig solution space}
\end{figure}

By using the approximation \eqref{eq1.1b}, mean and covariance matrix of the posterior predictive distribution are obtained as
\mathleft
\begin{eqnarray}\label{eq1.2}
E \left( y |x,D \right) &=& \frac{1}{L} \sum_{l=1}^L  \eta_{\widehat{\theta}^{(l)}}(x) \,, \\
Cov \left( y |x,D \right) &=&
\frac{1}{L} \sum_{l=1}^L  \left\{ \left(\eta_{\widehat{\theta}^{(l)}}(x)-E \left( y |x,D \right)\right)
\left(\eta_{\widehat{\theta}^{(l)}}(x)-E \left( y |x,D \right)\right)^T + \sigma_{\widehat{\theta}^{(l)}}^2(x) \bI \right\} \,,\label{eq1.3}
\end{eqnarray}
\mathcenter
where $E$ stands for expectation and $Cov$ for covariance, cf. Appendix \ref{appendix a1}. Note that \eqref{eq1.2} yields the inferred regression function $\eta_\theta(x) = E \left( y |x,D \right)$ according to the statistical model \eqref{eq1.0}, while \eqref{eq1.3} describes the total uncertainty associated with the prediction of the network ensemble. Mean and covariance matrix of the posterior predictive distribution are in accordance with the results given in \cite{lakshminarayanan2017} for the original approach of the deep ensembles. 

In \cite{lakshminarayanan2017}, the comparison between different methods is based on the posterior predictive distribution. The covariance matrix \eqref{eq1.3} comprises the reducible epistemic part of the uncertainty as well as the irreducible aleatoric part. However, when the goal is to infer the underlying regression function, an accurate estimation of the epistemic uncertainty is essential. The epistemic uncertainty is given by the covariance matrix of the single estimates of the regression function obtained by the ensemble of trained networks.
Mean and covariance matrix of the regression function are deduced in the following lemma.

\begin{lemma}\label{T2}
The approximate Bayesian inference associated with deep ensembles yields a posterior $\pi(\eta|x,D)$ for the regression function $\eta\equiv\eta_\theta (x)$ given by 
\begin{equation}
\pi(\eta|x,D) = \frac{1}{L} \sum_{l=1}^L \delta(\eta- \eta_{\widehat{\theta}^{(l)}}(x)) \,,
\end{equation}
with mean and covariance matrix given by
\begin{eqnarray}
E \left( \eta | x,D \right)  &=& \frac{1}{L} \sum_{l=1}^L \eta_{\widehat{\theta}^{(l)}}(x)\, , \label{eq2.1} \\ \label{eq2.2}
Cov \left( \eta |,x,D \right) &=&
\frac{1}{L} \sum_{l=1}^L  \left\{ \left(\eta_{\widehat{\theta}^{(l)}}(x)-E \left( \eta | x,D \right) \right)
\left(\eta_{\widehat{\theta}^{(l)}}(x)-E \left( \eta | x,D \right) \right)^T \right\} \, .
\end{eqnarray}
\end{lemma}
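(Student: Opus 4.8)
The plan is to obtain $\pi(\eta|x,D)$ as the pushforward of the approximate parameter posterior $q(\theta)$ from \eqref{eq1.1} under the map $\theta \mapsto \eta_\theta(x)$ (which, for a fixed test input $x$, is deterministic), and then to read off the first two moments directly.

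First I would observe that, conditional on $x$, the value of the regression function $\eta = \eta_\theta(x)$ carries no aleatoric noise: it is a deterministic function of $\theta$, in contrast to the observation $y$ in \eqref{eq1.0}. Its conditional density given $(x,D)$ is therefore
\begin{equation*}
\pi(\eta|x,D) = \int \delta\!\left(\eta - \eta_\theta(x)\right)\, \pi(\theta|D)\, \dd\theta
\approx \int \delta\!\left(\eta - \eta_\theta(x)\right)\, q(\theta)\, \dd\theta \, ,
\end{equation*}
where in the second step the true posterior $\pi(\theta|D)$ is replaced by the deep-ensembles approximation \eqref{eq1.1}; note that the test input $x$ is uninformative about $\theta$, so $\pi(\theta|x,D)=\pi(\theta|D)$. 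Substituting $q(\theta) = \frac{1}{L}\sum_{l=1}^L \delta(\theta - \whtl)$ and using the sifting property of the delta distribution yields $\pi(\eta|x,D) = \frac{1}{L}\sum_{l=1}^L \delta(\eta - \eta_{\whtl}(x))$, which is the claimed form.

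With this mixture of point masses in hand, the mean follows by linearity, $E(\eta|x,D) = \int \eta\, \pi(\eta|x,D)\, \dd\eta = \frac{1}{L}\sum_{l=1}^L \eta_{\whtl}(x)$, giving \eqref{eq2.1}. For the covariance I would insert $\pi(\eta|x,D)$ into $Cov(\eta|x,D) = \int \left(\eta - E(\eta|x,D)\right)\left(\eta - E(\eta|x,D)\right)^T \pi(\eta|x,D)\, \dd\eta$ and again apply the sifting property at each $\eta_{\whtl}(x)$, which reproduces \eqref{eq2.2}. This computation is essentially the one behind \eqref{eq1.2}--\eqref{eq1.3} (cf. Appendix \ref{appendix a1}), with the Gaussian kernel $N(y;\eta_\theta(x),\sigma_\theta^2(x)\bI)$ replaced by the degenerate kernel $\delta(\eta - \eta_\theta(x))$; consequently the aleatoric term $\sigma_{\whtl}^2(x)\bI$ present in \eqref{eq1.3} drops out, leaving only the epistemic (between-network) scatter.

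The argument is elementary, so there is no genuine obstacle; the one point requiring a little care is the pushforward of a sum of delta distributions through the nonlinear map $\eta_\theta(x)$. Formally this is just the identity $\int \delta(\eta - g(\theta))\,\delta(\theta - \whtl)\,\dd\theta = \delta(\eta - g(\whtl))$, and no Jacobian factors appear because one integrates against the point mass at $\whtl$ rather than transforming a density. I would also emphasize in the proof that $\pi(\eta|x,D)$ quantifies only parameter (epistemic) uncertainty in the regression function, in contrast to the posterior predictive distribution $\pi(y|x,D)$ of \eqref{eq1.1b}, which additionally includes the aleatoric covariance $\sigma_\theta^2(x)\bI$.
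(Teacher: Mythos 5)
Your proposal is correct and follows essentially the same route as the paper's proof in Appendix \ref{appendix T2}: the posterior for $\eta$ is obtained as the pushforward $\int \delta(\eta-\eta_\theta(x))\,\pi(\theta|D)\,\dd\theta$ with $\pi(\theta|D)$ replaced by the delta-mixture \eqref{eq1.1}, the sifting property gives the claimed mixture of point masses, and the two moments follow by direct integration. Your added remarks on the absence of Jacobian factors and on the contrast with the posterior predictive distribution are accurate but not needed beyond what the paper does.
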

\begin{proof}
The proof is given in Appendix \ref{appendix T2}.
\end{proof}

Note that the expectation of the posterior for the regression function \eqref{eq2.1} equals the expectation of the posterior predictive distribution \eqref{eq1.2}, while its covariance matrix \eqref{eq2.2} contains only the epistemic part of the covariance matrix of the posterior predictive distribution \eqref{eq1.3}.

\section{Extension to Gaussian mixture distributions}
If the aleatoric part of the covariance matrix of the posterior predictive distribution \eqref{eq1.3} is dominant, the covariance matrix of the regression function \eqref{eq2.2} is expected to be significantly smaller than that of the posterior predictive distribution (cf. example in Figure \ref{ex1} and results in Table \ref{table: results}). It follows that the (crude) approximation of the posterior in \eqref{eq1.1} with a family of delta distributions, which ignores the nonzero width of the true posterior around its (local) MAPs, is irrelevant for the total uncertainty in these cases. Nevertheless, this approximation can lead to a significant underrating of the epistemic uncertainty.

An improved approximation of the posterior distribution $\pi(\theta|D)$ is given by 
\begin{equation}\label{eq1.11}
q(\theta) = \frac{1}{L} \sum_{l=1}^L N(\theta;\whtl, \gamma_l \bI) \,,
\end{equation}
where the $\gamma_l$ correspond to the variances around the (local) MAPs $\whtl$.
The approximation \eqref{eq1.11} of the posterior extends the approximation in \eqref{eq1.1} by allowing for a finite width of the posterior around the (local) MAPs, and it reduces to \eqref{eq1.1} for $\gamma_l \to 0$.

The general idea behind variational inference \cite{blei2017} is to approximate the posterior $\pi(\theta|D)$ by a distribution that can be more easily handled. Specifically, one seeks the best approximation of the posterior within a parametric family of distributions such as the Gaussian mixture distribution in \eqref{eq1.11}. One way to do so is to use a Laplace approximation, for which the parameters of a single Gaussian approximation are calculated from the local properties of the posterior at the MAP (cf. \cite{ritter2018}). In deep learning approaches, however, the parameters are typically determined by minimizing the Kullback-Leibler divergence $KL(q(\theta)\Vert \pi(\theta|D))$ between the chosen family of distributions and the posterior. This is equivalent to maximizing the evidence lower bound (ELBO)
\begin{equation}\label{eq3.1}
ELBO = E_q \left[\log p(D|\theta)\right] - KL \left( q(\theta) \| \pi(\theta) \right) \,.
\end{equation}

Instead of maximizing the ELBO with respect to all parameters during network training, we here suggest a two-step procedure. First, the networks are independently trained in accordance with the conventional deep ensembles training procedure. The resulting (local) MAPs of the network parameters are taken as the first moments of the Gaussian mixture distributions. Second, the remaining parameters of \eqref{eq1.11}, which are the variances $\gamma$, are deduced by maximizing the ELBO \eqref{eq3.1} in a post-processing step. This can be done 
numerically for example via a grid search.

In the following,we assume that the only random weights in the network are those in the last layer.
Furthermore, we do not consider the randomness of $\sigma^2_\theta(x)$, but only that of $\eta_\theta(x)$.
The reason is that we are interested in the epistemic part of the uncertainty associated with the regression function. Also, $\sigma^2_\theta$ would not depend on the last layer but rather be an individual network parameter, when choosing a homoscedastic approach instead,
which could be a promising alternative to the heteroscedastic model \eqref{eq1.0}. Maximizing the ELBO can then be performed analytically, as stated in Theorem \ref{T3} below (cf. also Figure \ref{net} for visualizing the parameters treated as random in the proposed approach). Mean and covariance matrix of the regression function for the improved approximate Bayesian inference are given in Theorem \ref{T4} below, while Algorithm \ref{algo1} summarizes the proposed approach.

\begin{figure}[h]
    \centering
    \includegraphics[scale = 0.8]{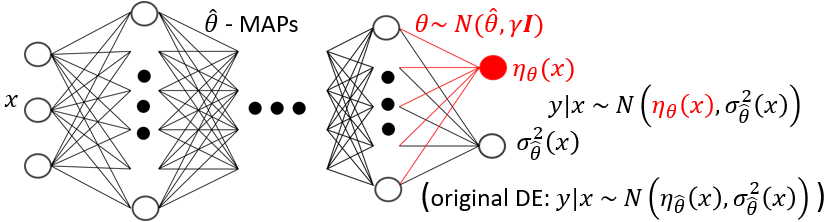}
    \caption{The proposed model of a single network is visualized here. A sample $y|x$ from the posterior predictive distribution is drawn from the network outputs $N(\eta_\theta (x),\sigma_{\hat{\theta}}^2 (x))$. All network weights are fixed to the MAP estimates after training, except the last linear unit of the predicted mean value $\eta_\theta (x)$. The weights of this last linear unit are normally distributed around the trained MAPs $\wht$ with variances $\gamma_l, l=1, \ldots, L$.}\label{net}
\end{figure}

\begin{theorem}\label{T3}
Let only the weights of the last layer in the neural network, which are related to the prediction of the regression function $\eta_\theta (x)$, be random and assume a linear transformation from the last layer to the network output. Approximate maximization of the ELBO \eqref{eq3.1} can then be done analytically when considering the approximation of the posterior distribution given in \eqref{eq1.11},
leading to the expression \eqref{eq7.9} in the Appendix. Furthermore, the variances in \eqref{eq1.11} tend to zero for an infinite amount of training data.
\end{theorem}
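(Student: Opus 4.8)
The idea is to use the linearity assumption to turn the ELBO into an explicit function of the variances $\gamma_1,\dots,\gamma_L$ whose dependence on each $\gamma_l$ is affine plus a single logarithmic term, and then maximize it coordinate by coordinate. First I would freeze every parameter except the last linear unit producing $\eta_\theta(x)$. Writing the free weights (stacked into a vector) as $w$ and the fixed penultimate-layer features of network $l$ as $\phi^{(l)}(x)$, the linearity assumption gives $\eta_\theta(x)=A^{(l)}(x)\,w$ on the support of the $l$-th mixture component, where $A^{(l)}(x)$ is a fixed matrix built from $\phi^{(l)}(x)$ (concretely $A^{(l)}(x)=\phi^{(l)}(x)^{T}\otimes\bI_{p_y}$ after vectorizing the weight matrix), and $\sigma_\theta^2(x)$ is held at $\sigma_{\whtl}^2(x)$. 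Substituting into the log-likelihood of \eqref{eq1.0} makes $\log p(D|\theta)$ a quadratic in $w$ on that component, $-\tfrac12 w^{T}M_l w + b_l^{T}w + c_l$, with $M_l=\sum_{i=1}^N (A^{(l)}(x_i))^{T}A^{(l)}(x_i)/\sigma_{\whtl}^2(x_i)$ a positive semidefinite matrix.

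\textbf{The two closed-form terms.} Because $q$ is the mixture $\tfrac1L\sum_l N(w;\widehat{w}^{(l)},\gamma_l\bI)$, expectation is linear over the components, and the expectation of a quadratic under a Gaussian involves only its first two moments; hence $E_q[\log p(D|\theta)]$ is available in closed form, and its $\gamma_l$-dependence enters only through the linear term $-\tfrac{1}{2L}\gamma_l\operatorname{tr}(M_l)$. The prior term behaves similarly: since $\log\pi(\theta)=-\tfrac{\lambda}{2}\|\theta\|^2+\mathrm{const}$, one gets $E_q[\log\pi(\theta)]=-\tfrac{\lambda}{2L}\sum_l\big(\|\widehat{w}^{(l)}\|^2+d\,\gamma_l\big)+\mathrm{const}$, with $d=\dim w$, again affine in each $\gamma_l$.

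\textbf{The obstacle: the mixture entropy.} The remaining piece of the ELBO is $-E_q[\log q(\theta)]=H(q)$, the differential entropy of a Gaussian mixture, which has no closed form — this is exactly the step that makes the maximization only \emph{approximate}, and I expect it to be the main difficulty. The plan is to exploit that the MAP locations $\whtl$ are distinct modes and that the optimal $\gamma_l$ will turn out small: for $\theta$ near $\whtl$ one has $q(\theta)\approx\tfrac1L N(\theta;\whtl,\gamma_l\bI)$, so $E_{N(\,\cdot\,;\whtl,\gamma_l\bI)}[\log q]\approx-\tfrac{d}{2}\log(2\pi e\gamma_l)-\log L$; in fact this is an upper bound (dropping the other components only decreases $q$), so replacing $H(q)$ by $\log L+\tfrac1L\sum_l\tfrac{d}{2}\log(2\pi e\gamma_l)$ yields a genuine lower bound on the ELBO that is tight when the modes are well separated. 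Controlling the error of this approximation in terms of the ratios $\|\whtl-\widehat{\theta}^{(l')}\|/\sqrt{\gamma_l}$ is the delicate point; the cleanest route is simply to maximize the lower bound.

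\textbf{Analytical maximization and vanishing variances.} Collecting the three contributions, the approximated ELBO decouples across $l$, and as a function of $\gamma_l$ it equals $-\tfrac{1}{2L}\big(\operatorname{tr}(M_l)+\lambda d\big)\gamma_l+\tfrac{d}{2L}\log\gamma_l+\mathrm{const}$, which is strictly concave on $(0,\infty)$; setting the derivative to zero gives the unique maximizer
\[
\gamma_l=\frac{d}{\operatorname{tr}(M_l)+\lambda d},
\]
which, once $M_l$ is written out through $\phi^{(l)}$ and $\sigma_{\whtl}^2$, is the expression \eqref{eq7.9}. For the last assertion I would note that $\operatorname{tr}(M_l)=\sum_{i=1}^N\operatorname{tr}\!\big((A^{(l)}(x_i))^{T}A^{(l)}(x_i)\big)/\sigma_{\whtl}^2(x_i)$ is a sum of $N$ nonnegative terms; since $D$ consists of iid draws from \eqref{eq1.0}, under mild regularity ($\sigma_{\whtl}^2$ bounded away from $0$ and $\infty$, the trained features not asymptotically degenerate) $\tfrac1N\operatorname{tr}(M_l)$ converges to a positive constant, so $\operatorname{tr}(M_l)\to\infty$ with $N$ and, $d$ and $\lambda$ being fixed, $\gamma_l=O(1/N)\to0$.
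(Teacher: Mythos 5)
Your proposal follows essentially the same route as the paper's proof: use the linearity of the last layer to make $E_q[\log p(D|\theta)]$ affine in each $\gamma_l$ (your $-\tfrac{1}{2L}\gamma_l\operatorname{tr}(M_l)$ with $\operatorname{tr}(M_l)=\sum_i p_y\Vert\phi^{(l)}(x_i)\Vert^2/\sigma^2_{\widehat{\theta}^{(l)}}(x_i)$ is exactly the paper's $\gamma_l p_y\Vert\widehat{\eta}^{(l)}(x_i)\Vert^2$ term), invoke the well-separated-modes approximation so that each $\gamma_l$ otherwise enters only through a $\log\gamma_l$ term, and maximize the resulting strictly concave expression coordinatewise; with $d=p_yp_{\hat\eta}$ your maximizer $d/(\operatorname{tr}(M_l)+\lambda d)$ reduces exactly to \eqref{eq7.9}, and your $O(1/N)$ argument for the last claim matches the paper's (and is a bit more explicit about the regularity needed). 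The one genuine slip is the claimed direction of the entropy bound: since $q(\theta)\ge\frac{1}{L}N(\theta;\whtl,\gamma_l\bI)$ pointwise, one gets $E_{N(\cdot;\whtl,\gamma_l\bI)}[\log q]\ge -\tfrac{d}{2}\log(2\pi e\gamma_l)-\log L$, so the surrogate $\log L+\frac{1}{L}\sum_l\frac{d}{2}\log(2\pi e\gamma_l)$ is an \emph{upper} bound on $H(q)$ and substituting it produces an upper bound on the ELBO, not the "genuine lower bound" you assert — it is the same over-estimate the paper makes by replacing $\log\sum_r N_r$ with $\log N_l$ inside the KL integral. This does not change the final formula, but the clean rationale "maximize a lower bound" is not available; like the paper, you are maximizing a surrogate whose validity rests only on the modes being well separated relative to $\sqrt{\gamma_l}$.
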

\begin{proof}
The proof is given in Appendix \ref{appendix T3}.
\end{proof}
\begin{algorithm}[h]
\SetAlgoLined
\KwResult{$q(\theta) := \frac{1}{L} \sum_{l=1}^L N(\theta;\whtl, \gamma_l \bI)$, instead of $q(\theta) := \frac{1}{L} \sum_{l=1}^L \delta(\theta - \whtl)$\;}
 1. initialize and independently train networks $\eta_{\widehat{\theta}^{(l)}}$, $l=1,\ldots,L,$ in accordance with the deep ensembles approach\;
 2. compute $\gamma_l$, $l=1,\ldots,L,$ according to \eqref{eq7.9}.
 \caption{improved approximation of} the posterior $\pi (\theta|D)$\label{algo1}
\end{algorithm}

\begin{theorem}\label{T4}
The improved approximation for the Bayesian inference associated with deep ensembles (from delta to normal distributions, cf. equation \eqref{eq1.11}) yields a posterior $\pi(\eta|x,D)$ for the
regression function $\eta\equiv\eta_\theta (x)$ given by 
\begin{equation}\label{eq1.18}
\pi(\eta|x,D) = \frac{1}{L} \sum_{l=1}^L \int \delta(\eta- \eta_\theta(x)) N \left( \theta; \whtl, \gamma_l \bI \right) \dd \theta \,,
\end{equation}
with mean and covariance matrix given approximately by
\mathleft
\begin{eqnarray}
E \left( \eta | x,D \right)  &\approx &  \frac{1}{L} \sum_{l=1}^L  \eta_{\widehat{\theta}^{(l)}}(x) \,, \label{eq6.1}\\\label{eq6.2}
Cov \left( \eta |x,D \right) 
&\approx& 
\frac{1}{L} \sum_{l=1}^L \left\{ \left(\eta_{\widehat{\theta}^{(l)}}(x) - E \left( \eta | x,D \right) \right)
\left(\eta_{\widehat{\theta}^{(l)}}(x) - E \left( \eta | x,D \right) \right)^T + \gamma_l J_l J_l^T \right\},
\end{eqnarray}
\mathcenter
where $J_l= \partial \eta / \partial \theta$ is evaluated at $\theta=\whtl$.
The Jacobian matrix $J_l$ can be easily calculated in terms of the output of the next-to-last layer when considering only the weights of the last layer in the neural networks to be random, and when applying no nonlinear transformation to the output of the last layer.
\end{theorem}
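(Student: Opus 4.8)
The plan is to obtain the posterior for the regression function as the pushforward of the improved approximate posterior $q(\theta)$ from \eqref{eq1.11} under the deterministic map $\theta \mapsto \eta_\theta(x)$, and then to extract its first two moments by linearising $\eta_\theta(x)$ around each local MAP $\whtl$. For the distributional claim \eqref{eq1.18}, I would marginalise directly: since $\eta \equiv \eta_\theta(x)$ is a function of $\theta$ alone, $\pi(\eta|x,D) = \int \delta(\eta - \eta_\theta(x))\, q(\theta)\, \dd\theta$; substituting the Gaussian-mixture form of $q$ and exchanging the finite sum with the integral yields \eqref{eq1.18}. Lemma \ref{T2} is recovered in the limit $\gamma_l \to 0$, which serves as a consistency check.

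For the mean \eqref{eq6.1}, I would write $E(\eta|x,D) = \int \eta_\theta(x)\, q(\theta)\, \dd\theta = \frac{1}{L}\sum_{l=1}^L \int \eta_\theta(x)\, N(\theta;\whtl,\gamma_l\bI)\,\dd\theta$, using Fubini and the sifting property of the delta. Expanding $\eta_\theta(x) = \eta_{\whtl}(x) + J_l(\theta - \whtl) + O(\|\theta - \whtl\|^2)$ and using that $N(\theta;\whtl,\gamma_l\bI)$ has mean $\whtl$, the linear term integrates to zero, so each component integral equals $\eta_{\whtl}(x)$ up to a second-order correction, which gives \eqref{eq6.1}.

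For the covariance \eqref{eq6.2}, I would apply the mixture (total-covariance) decomposition: with $q = \frac{1}{L}\sum_l q_l$ and $q_l = N(\cdot;\whtl,\gamma_l\bI)$,
\[
Cov(\eta|x,D) = \frac{1}{L}\sum_{l=1}^L Cov_{q_l}(\eta) + \frac{1}{L}\sum_{l=1}^L \bigl(E_{q_l}(\eta) - E(\eta|x,D)\bigr)\bigl(E_{q_l}(\eta) - E(\eta|x,D)\bigr)^T .
\]
Substituting $E_{q_l}(\eta) \approx \eta_{\whtl}(x)$ from the previous step turns the between-component sum into the first bracketed term of \eqref{eq6.2}. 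For the within-component term, the same linearisation gives $\eta_\theta(x) - E_{q_l}(\eta) \approx J_l(\theta - \whtl)$, hence $Cov_{q_l}(\eta) \approx J_l(\gamma_l\bI)J_l^T = \gamma_l J_l J_l^T$; summing over $l$ produces the second term. Finally, under the standing assumptions that only the last-layer weights entering $\eta_\theta(x)$ are random and that the last layer acts linearly, $\eta_\theta(x)$ is affine in those weights, so $J_l$ has the simple Kronecker structure $h(x)^T \otimes \bI$ (up to a bias column), with $h(x)$ the output of the next-to-last layer; then $J_l J_l^T$ is an explicit multiple of the identity determined by $\|h(x)\|^2$, and in this case the linearisations above become exact.

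The main obstacle is controlling the Taylor remainder in the general case: the $O(\|\theta - \whtl\|^2)$ contributions to both moments are negligible only when the $\gamma_l$ are small, which is precisely what Theorem \ref{T3} guarantees asymptotically (the variances tend to zero as the training set grows), and otherwise would require a bound on the curvature of $\theta \mapsto \eta_\theta(x)$ near each MAP. In the architecture actually used here this difficulty disappears, since $\eta_\theta(x)$ is exactly affine in the random weights; the only residual approximation in \eqref{eq6.1}--\eqref{eq6.2} is then the one already inherited from the approximate ELBO maximisation of Theorem \ref{T3}.
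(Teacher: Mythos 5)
Your proposal is correct and follows essentially the same route as the paper: the posterior for $\eta$ is obtained as the pushforward of the Gaussian-mixture $q(\theta)$, and the new within-component contribution is evaluated via the first-order Taylor expansion $\eta_\theta(x)\approx\eta_{\widehat{\theta}^{(l)}}(x)+J_l(\theta-\widehat{\theta}^{(l)})$, giving $Cov_{q_l}(\eta)\approx\gamma_l J_l J_l^T$ exactly as in Appendix D. The only cosmetic difference is that you package the decomposition as the law of total covariance rather than the explicit four-term expansion inherited from the proof of Lemma 1, and you add the (correct) observations that the linearisation is exact for the affine last layer and that $\gamma_l\to 0$ recovers the delta-distribution result.
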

\begin{proof}
The proof is given in Appendix \ref{appendix T4}.
\end{proof}

The improved approximation of the Bayesian inference associated with deep ensembles yields the same expectation \eqref{eq6.1} for the regression function as the original approximation \eqref{eq2.1}, together with a covariance matrix \eqref{eq6.2} that includes the additional term $\gamma_l J_l J_l^T$ (compared to \eqref{eq2.2}). In this way, the epistemic part of the uncertainty of the conventional deep ensembes is enlarged.

Samples from the posterior \eqref{eq1.18} for the regression function can be easily obtained by first sampling an $l \in \{1, \ldots,L\}$ with probability $\frac{1}{L}$. Second, a $\theta^{(l)}$  is sampled from $N(\widehat{\theta}^{(l)},\gamma_l \bI)$ and third, $\eta_{\theta^{(l)}}(x)$ is evaluated. To sample from the posterior predictive distribution, a fourth step is added: sample a $y$ from $N \left( \eta_\theta(x), \sigma_{\hat{\theta}^{(l)}}^2(x) \bI \right)$ of the $l$th network. The sampling procedure is summarized in Algorithm \ref{algo2}.

\begin{algorithm}[h]
\SetAlgoLined
\KwResult{sample from $\pi(\eta |x,D)$: steps $1$-$3$; sample from $\pi(y |x,D)$: steps $1$-$4$\;}
 1. sample $l$th network uniformly from $l\in\{1, \ldots,L\}$\;
 2. sample network weights $\theta^{(l)}$ from $N(\widehat{\theta}^{(l)},\gamma_l \bI)$\;
 3. evaluate $\eta_{\theta^{(l)}}(x)$\;
 4. evaluate $\sigma_{\hat{\theta}^{(l)}}^2(x)$ and sample from $N \left( \eta_{\theta^{(l)}}(x), \sigma_{\hat{\theta}^{(l)}}^2(x) \bI \right)$.
 \caption{sample from the posterior of the regression function $\pi(\eta |x,D)$ and from the posterior predictive distribution $\pi(y |x,D)$}\label{algo2}
\end{algorithm}

\section{Results}
In this section, we numerically compare the classical deep ensembles \cite{lakshminarayanan2017} (DE) to the introduced extended approach (DE extended).
The implemented code, together with examples, is provided\footnote{\href{https://gitlab1.ptb.de/hoffma31/bayesiandeepensembles}{BayesianDeepEnsembles}}.
Following Algorithm \ref{algo1}, the network training procedure is exactly the same for the classical and the extended deep ensembles. The inferred regression function is identical in both cases, namely the average over the estimates of all ensemble members $\frac{1}{L}\sum_{l=1}^L\eta_{\hat{\theta}^{(l)}} (x)$ at the trained MAPs $\wht$. The only difference is the additional term  of the extended covariance matrix in \eqref{eq6.2}, which is related to the epistemic part of the uncertainty.

The architecture of all networks in this work is fully connected, with three hidden layers consisting of $128,\ 64$, and $32$ neurons, respectively. Uncertainties are understood as $95\%$ credible intervals calculated in terms of an assumed Gaussian distribution using the approximate variance formulas (i.e. $1.96$ times the standard deviation). Coverage probabilities refer to the number of test cases that lie within the $95\%$ symmetric credible interval. Training and test data are always disjoint and all obtained results are shown in respect to the test set.

\subsection*{Experiments on simulated data}
Training is performed over $60$ epochs with a learning rate drop factor of $0.1$ for the last five epochs and a mini batch size of $64$. The regularization parameter $\lambda$ is always set to one over the number of available training data.
All data of the output space are normalized by subtracting the mean and dividing the difference by the standard deviation of the corresponding training set. The input data are uniformly drawn from the support $[-1,1]$. The training and test sets are disjoint. All presented results refer to the test data. The hyperparameters and the network architecture are not fine-tuned, as the achieved accuracy suffices for the purpose of this work.

\begin{figure}[h]
    \centering
    \includegraphics[scale = 1.025]{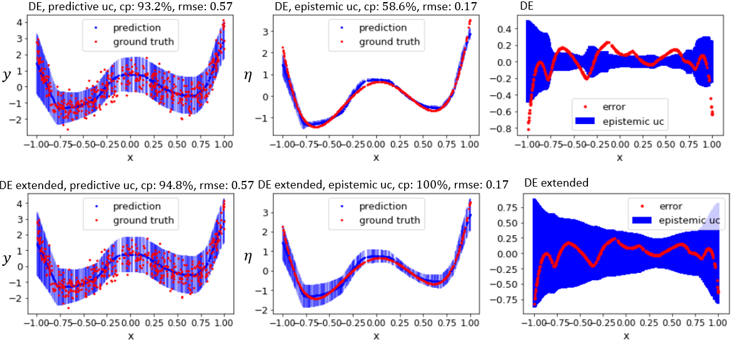}
    \caption{The results are plotted for the classical deep ensembles (first row) and the proposed extension (second row). The coverage probabilities (cp) and root-mean-squared errors (rmse) are given for the estimated posterior predictive distribution (first column) and the regression function (second column). The estimates, predictions and corresponding uncertainties are plotted (in blue) together with the underlying disturbed and undisturbed ground truth data (in red). In the third column, the epistemic uncertainties are plotted together with the errors of the estimated regression function to the ground truth.}\label{ex1}
\end{figure}

\begin{figure}[h]
    \centering
    \includegraphics[scale = 0.9]{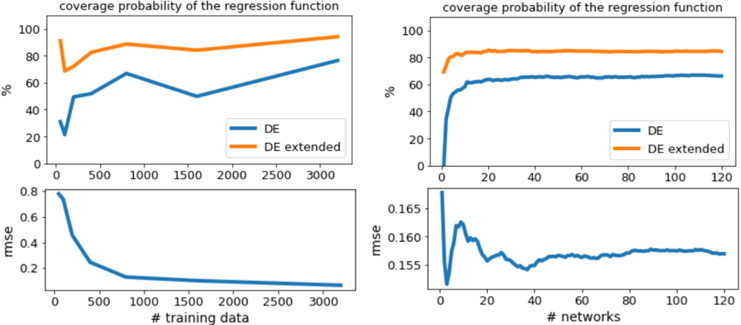}
    \caption{The coverage probabilities for the regression function of the classical deep ensembles approach and its suggested extension are plotted in the first row in dependence on an increasing amount of training data and an increasing number of ensemble members, respectively. The second row shows the corresponding root-mean-squared errors (which are the same for both approaches)}.\label{ex2}
\end{figure}

Figure \ref{ex1} shows the predicted regression function and the resulting total and epistemic uncertainties for the classical and extended deep ensembles together with the ground truth. The data $y(x)\in\mathbb{R},\ x\in [-1,1],$ are normally distributed around the regression function $\eta (x) = \frac{1}{2} \left((4.5x)^4-(18x)^2+22.5x\right)$ with a standard deviation of $10$. An ensemble of $10$ networks was trained using $200$ data samples with a learning rate equal to $1/200$. The total uncertainty is much greater than the epistemic part of the uncertainty for the classical deep ensembles, when comparing the first two images of the first row. This means that the aleatoric part of the uncertainty is dominant. The total uncertainty almost perfectly covers the underlying disturbed data and hence, the approximation of the posterior predictive distribution seems to work well for both approaches. However, the second column shows that the uncertainty for the inferred regression function covers the ground truth in less than $60\%$ of the considered test cases for the conventional deep ensembles, i.e. the conventional deep ensembles underrate the uncertainty.
In contrast, the extended deep ensembles cover the regression function completely, as shown in the second row (second and third image), and hence yield a more reliable, yet to some extent conservative, uncertainty quantification.

Another comparison between the classical and extended deep ensembles is given in Figure \ref{ex2}, which analyzes the dependence of the amount of training data and the number of trained ensemble members on the epistemic uncertainty. The data samples $y(x)\in\mathbb{R},\ x\in [-1,1]^2,$ are normally distributed around the regression function $\eta (x) = \sum_{i=1}^2 (1.5x_i-1)^2 (1.3x_i+1)^2$ with a standard deviation of $0.2$.
The results of the first column of Figure \ref{ex2} are based on an ensemble of $10$ networks which are trained with a learning rate of $0.001$. The ensembles yielding the plots in the second column are trained using $600$ data samples with a learning rate of $1/600$. The results show that the extended approach consistently outperforms the classical deep ensembles with a coverage probability of over $80\%$ in almost every setting. Even for a single network, the coverage of the extended method is about $70\%$, while there is no epistemic uncertainty at all in the classical method for this case.
We observe in this example that the uncertainty is not further improved when the number of networks for the ensemble is increased.
The uncertainties of the two approaches are expected to converge for an infinite amount of training data, which seems to be supported by the observed results. Finally, we note that the root-mean-squared error of the predicted regression function is the same for the two approaches since the same estimates are reached. It decreases for an increasing amount of training data and is more stable for an increasing number of ensemble members.

\subsection*{Experiments on real data}
The proposed extension is tested on four real data sets downloaded from the UCI data base \cite{dua2019}. A summary of the data sets (e.g. amount of training and test data) and the chosen hyper parameters for the network training are given in Table \ref{table: parameter}. All data are normalized, i.e. the mean is subtracted and the difference is divided by its standard deviation. All networks are trained using the Adam optimizer \cite{kingma2014adam} and a constant learning rate up to the last five epochs where the learning rate drops by a factor of $0.5$ in every epoch. 

\begin{table}[h]
\begin{tabularx}{\textwidth}{ l||X|l|X|l } 
\hline
 \multicolumn{5}{l}{\textbf{Data information}} \\ \hline
 name & \begin{tabular}[c]{@{}l@{}}QSAR aquatic\\ toxicity\end{tabular} & \begin{tabular}[c]{@{}l@{}}Yacht\\ hydrodynamics\end{tabular} & \begin{tabular}[c]{@{}l@{}}Blog\\ feedback\end{tabular} & \begin{tabular}[c]{@{}l@{}}Year prediction \\ MSD\end{tabular}  \\ \hline
 reference & \cite{dua2019,cassotti2014prediction} & \cite{dua2019,gerritsma1981geometry,ortigosa2007neural} & \cite{dua2019,buza2014feedback} & \cite{dua2019,bertin-mahieux2011} \\ \hline
 \# training data & $437$ & $246$ & $52,397$ & $463,715$ \\ \hline
 \# test data & $109$ & $62$ & $7,624$ & $51,630$ \\ \hline
 input dimension & $8$ & $6$ & $276$ & $90$ \\ \hline
 \multicolumn{5}{l}{} \\
 \multicolumn{5}{l}{\textbf{Hyper parameters for the network training}} \\ \hline
 learning rate & $1e^{-3}$ & $1e^{-3}$ & $1e^{-5}$ & $5e^{-5}$ \\ \hline
 $\lambda$ & $1e^{-2}$ & $1e^{-3}$ & $1e^{-5}$ & $1e^{-4}$ \\ \hline 
 mini batch size & $128$ & $8$ & $64$ & $128$ \\ \hline
 epochs & $60$ & $100$ & $50$ & $40$ \\ \hline
\end{tabularx}
\caption{This table provides general information about the considered data sets and specifies the hyperparameters used for the network training. $\lambda$ is the $L2$ regularization parameter for the network parameters (cf. \eqref{eq0.0}).
For all four examples the output dimension equals one.}\label{table: parameter}
\end{table}

The results are shown in Table \ref{table: results} for the classical deep ensembles \cite{lakshminarayanan2017} and the proposed extension. All investigations are conducted for an ensemble of five neural networks and an ensemble of ten neural networks, respectively. Note that the root-mean-squared error (RMSE) of the predicted regression function based on the given (noisy) target values and the aleatoric uncertainty are the same for both methods since the proposed extension has only an effect on the epistemic uncertainty via the post-processing step. The epistemic and total coverages describe the relative amount of target data that lie in the $95\%$ symmetric credible intervals. The epistemic coverage relates only to the epistemic uncertainty, while the total coverage relates to both the epistemic and the aleatoric uncertainty, i.e. the total uncertainty. A ``good" uncertainty estimation would have a total coverage of $95\%$, which is roughly true for both variants and the here considered examples. The epistemic uncertainty, however, relates to the distance of the inferred regression function to the underlying ground truth, which is unknown. The epistemic coverage therefore refers to the same noisy target values as the total coverage, and is expected to be smaller than $95\%$.

\begin{table}[h]
\begin{tabular}{|ll||ll|ll|}
\hline
\multicolumn{2}{|l||}{method} & \multicolumn{2}{l|}{deep ensembles} & \multicolumn{2}{l|}{proposed extension} \\ \hline

\multicolumn{2}{|l||}{\# ensemble members} & \multicolumn{1}{l|}{$5$} & $10$ & \multicolumn{1}{l|}{$5$} & $10$ \\ \hline \hline

\multicolumn{1}{|l|}{\multirow{4}{*}{\begin{tabular}[c]{@{}l@{}}QSAR\\ aquatic\\ toxicity\end{tabular}}} & RMSE & \multicolumn{1}{l|}{$0.555$} & $0.557$ & \multicolumn{1}{l|}{$0.555$} & $0.557$ \\ \cline{2-6}

\multicolumn{1}{|l|}{} & epistemic coverage in $\%$ & \multicolumn{1}{l|}{$24.8$} & $28.4$ & \multicolumn{1}{l|}{$42.2$} & $47.7$ \\ \cline{2-6} 

\multicolumn{1}{|l|}{} & total coverage in $\%$ & \multicolumn{1}{l|}{$95.4$} & $95.4$ & \multicolumn{1}{l|}{$95.4$} & $95.4$ \\ \cline{2-6} 

\multicolumn{1}{|l|}{} & ratio epistemic-aleatoric variance & \multicolumn{1}{l|}{$0.030$} & $0.036$ & \multicolumn{1}{l|}{$0.088$} & $0.098$ \\ \hline \hline 


\multicolumn{1}{|l|}{\multirow{4}{*}{\begin{tabular}[c]{@{}l@{}}Yacht\\ hydrodynamics\end{tabular}}} & RMSE & \multicolumn{1}{l|}{$0.085$} & $0.078$ & \multicolumn{1}{l|}{$0.085$} & $0.078$ \\ \cline{2-6}

\multicolumn{1}{|l|}{} & epistemic coverage in $\%$ & \multicolumn{1}{l|}{$75.8$} & $87.1$ & \multicolumn{1}{l|}{$82.3$} & $90.3$ \\ \cline{2-6} 

\multicolumn{1}{|l|}{} & total coverage in $\%$ & \multicolumn{1}{l|}{$98.4$} & $98.4$ & \multicolumn{1}{l|}{$98.4$} & $98.4$ \\ \cline{2-6} 

\multicolumn{1}{|l|}{} & ratio epistemic-aleatoric variance & \multicolumn{1}{l|}{$0.284$} & $0.383$ & \multicolumn{1}{l|}{$0.412$} & $0.509$ \\ \hline \hline 


\multicolumn{1}{|l|}{\multirow{4}{*}{\begin{tabular}[c]{@{}l@{}}Blog\\ feedback\end{tabular}}} & RMSE & \multicolumn{1}{l|}{$0.684$} & $0.683$ & \multicolumn{1}{l|}{$0.684$} & $0.683$ \\ \cline{2-6}

\multicolumn{1}{|l|}{} & epistemic coverage in $\%$ & \multicolumn{1}{l|}{$41.0$} & $44.7$ & \multicolumn{1}{l|}{$41.6$} & $45.0$ \\ \cline{2-6} 

\multicolumn{1}{|l|}{} & total coverage in $\%$ & \multicolumn{1}{l|}{$96.8$} & $96.8$ & \multicolumn{1}{l|}{$96.8$} & $96.8$ \\ \cline{2-6} 

\multicolumn{1}{|l|}{} & ratio epistemic-aleatoric variance & \multicolumn{1}{l|}{$0.093$} & $0.092$ & \multicolumn{1}{l|}{$0.094$} & $0.092$ \\ \hline \hline 


\multicolumn{1}{|l|}{\multirow{4}{*}{\begin{tabular}[c]{@{}l@{}}Year\\ prediction\\ MSD\end{tabular}}} & RMSE & \multicolumn{1}{l|}{$0.795$} & $0.795$ & \multicolumn{1}{l|}{$0.795$} & $0.795$ \\ \cline{2-6}

\multicolumn{1}{|l|}{} & epistemic coverage in $\%$ & \multicolumn{1}{l|}{$20.4$} & $22.8$ & \multicolumn{1}{l|}{$20.5$} & $22.8$ \\ \cline{2-6} 

\multicolumn{1}{|l|}{} & total coverage in $\%$ & \multicolumn{1}{l|}{$95.2$} & $95.3$ & \multicolumn{1}{l|}{$95.2$} & $95.3$ \\ \cline{2-6} 

\multicolumn{1}{|l|}{} & ratio epistemic-aleatoric variance & \multicolumn{1}{l|}{$0.024$} & $0.027$ & \multicolumn{1}{l|}{$0.024$} & $0.027$ \\ \hline 


\end{tabular}\caption{The results are shown for the classical deep ensembles and the proposed extension for the four data sets from Table \ref{table: parameter}. The root-mean-squared error (RMSE) of the predicted regression function and the epistemic and total coverages, i.e. the relative amount of target data that lie in the $95\%$ symmetric credible intervals, are based on the given (noisy) target values. The ratio describes the mean over the test set of the estimated epistemic variances divided by the estimated aleatoric variances.}\label{table: results}
\end{table}

The ratio between the epistemic and aleatoric variances describes the proportion of the two estimated uncertainty sources. Furthermore, this ratio shows how the epistemic part of the uncertainty changes for the proposed extension in comparison the original method. In the considered examples, the aleatoric part dominates the uncertainty, because the ratio of the epistemic and aleatoric variances is always smaller than one. However, the proposed extension has a significant impact on the estimated epistemic uncertainties in the first two data sets (QSAR aquatic toxicity and yacht hydrodynamics) where only comparatively few data samples are available. Here, the increase of the epistemic uncertainty and the epistemic coverage for the proposed extension of deep ensembles can be considered an improvement, because the total coverage remains well calibrated without being further increased. In contrast, the proposed extension shows no effect on the last data set (year prediction MSD), which means that the proposed approximation of the posterior of the network parameters \eqref{eq1.11} through a family of normal distributions can safely be replaced with a family of delta distributions according to the classical deep ensembles \eqref{eq1.1}. 
This finding is also in accordance with the expectation that widths of the normal distributions tend to zero when the amount of training data become large (cf. Theorem \ref{T2}).

\section{Conclusions and Outlook}
In this work, we have explicitly derived the deep ensembles \cite{lakshminarayanan2017} as an approximation of a conventional Bayesian inference by stating the underlying approximations and assumptions, including the required statistical model and choice of prior. We have developed an extension to the deep ensembles that improves its uncertainty quantification without losing any of the advantages of the approach, and without altering its training procedure or inferred regression function. This is achieved by replacing the delta distributions that serve as an approximation of the posterior for the network parameters in a Bayesian framework with normal distributions that have finite variances. The corresponding variances can be computed analytically in a post-processing step by minimizing the Kullback-Leibler divergence between the approximate posterior and the true posterior.
The introduced method can also be employed to easily obtain an epistemic uncertainty for a single trained neural network.

The proposed extension of the deep ensembles yields an additional term to the covariance matrix of the estimated regression function that results in an enlarged epistemic uncertainty.
Simple numerical experiments illustrate the effect of the extended approach, leading to a more appropriate uncertainty quantification for the inferred regression function than classical deep ensembles.

The aim of this paper was to explicitly state the underlying approximations and assumptions needed to view the deep ensembles as an approximate Bayesian method, and to introduce a straightforward and easy-to-implement extension that leads to an improved uncertainty quantification for this state-of-the-art method. The basis of the proposed extension is similar to ideas that have already been put forward in related work \cite{maddox2019,wilson2020}. 
In contrast to those approaches, however, the extension of deep ensembles presented here does not alter the actual method, but simply adds a post-processing step. 
Future work could address the adaption to classification problems, replacing the linear unit of the network output with a convolution layer for imaging tasks, and relaxing the uniform weights $1/L$ of the Gaussian mixture distributions. Combining our extension with other methods that improve the accuracy of deep ensembles \cite{wenzel2020} or their extrapolation power \cite{pearce2020} could be analyzed as well.

\section*{Acknowledgements}
The authors thank Shinichi Nakajima for proofreading the manuscript and valuable remarks.

\bibliographystyle{ieeetr}
\bibliography{bibli}

\appendixtitleon
\appendixtitletocon
\begin{appendices}
\section{ }\label{appendix theory}
\subsection{Mean and covariance matrix of the posterior predictive distribution for the original deep ensembles approach}\label{appendix a1}

The first \eqref{eq1.2} and central second \eqref{eq1.3} moments of the approximation \eqref{eq1.1b} of the posterior predictive distribution are calculated as follows:

\begin{eqnarray*}
E \left( y |x,D \right) &=& \int y \pi(y|x,D) \dd y \\
&\approx& \frac{1}{L} \sum_{l=1}^L \int y N \left( y; \eta_{\widehat{\theta}^{(l)}}(x), \sigma_{\widehat{\theta}^{(l)}}^2(x) \bI \right) \dd y  \\
&=& \frac{1}{L} \sum_{l=1}^L  \eta_{\widehat{\theta}^{(l)}}(x) \,, \\
Cov \left( y |x,D \right) 
&=&\int  \left[ \left(y-E \left( y |x,D \right)\right) \left(y-E \left( y |x,D \right)\right)^T \right] \pi(y|x,D) \dd y  \\
&\approx& \frac{1}{L} \sum_{l=1}^L \int [ \left(\eta_{\widehat{\theta}^{(l)}}(x)-E \left( y |x,D \right)\right) \left(\eta_{\widehat{\theta}^{(l)}}(x)-E \left( y |x,D \right)\right)^T \\
&+& \left(y-\eta_{\widehat{\theta}^{(l)}}(x)\right)\left(\eta_{\widehat{\theta}^{(l)}}(x)-E \left( y |x,D \right)\right)^T \\
&+& \left(\eta_{\widehat{\theta}^{(l)}}(x)-E \left( y |x,D \right)\right) \left(y-\eta_{\widehat{\theta}^{(l)}}(x)\right)^T\\
&+& \left(y-\eta_{\widehat{\theta}^{(l)}}(x)\right) \left(y-\eta_{\widehat{\theta}^{(l)}}(x)\right)^T ]\ N \left( y; \eta_{\widehat{\theta}^{(l)}}(x), \sigma_{\widehat{\theta}^{(l)}}^2(x) \bI \right) \dd y  \\
&=& \frac{1}{L} \sum_{l=1}^L  \left[\left(\eta_{\widehat{\theta}^{(l)}}(x)-E \left( y |x,D \right)\right) \left(\eta_{\widehat{\theta}^{(l)}}(x)-E \left( y |x,D \right)\right)^T + \sigma_{\widehat{\theta}^{(l)}}^2(x) \bI\right] \,.
\end{eqnarray*}

\subsection{Proof of Lemma \ref{T2}}\label{appendix T2}
Application of the change-of-variables formula and the approximation \eqref{eq1.1} of the posterior $\pi(\theta|D)$ yield the following for the regression function $\eta\equiv\eta_\theta (x)$:
\begin{eqnarray*}
\pi(\eta |x,D) &=& \int \delta(\eta-\eta_\theta(x)) \pi(\theta|D) \dd \theta \\
&\approx& \int \delta(\eta-\eta_\theta(x)) \frac{1}{L} \sum_{l=1}^L \delta(\theta - \whtl)  \dd \theta  \\
&=& \frac{1}{L} \sum_{l=1}^L \delta(\eta- \eta_{\widehat{\theta}^{(l)}}(x))\,.
\end{eqnarray*}
The expectation and covariance matrix then follow immediately:

\begin{eqnarray*}
E \left( \eta |x,D \right) &=& \int \eta \pi(\eta|x,D) \dd \eta \\
&\approx& \frac{1}{L} \sum_{l=1}^L \int \eta \delta(\eta- \eta_{\widehat{\theta}^{(l)}}(x))  \dd \eta \\
&=& \frac{1}{L} \sum_{l=1}^L  \eta_{\widehat{\theta}^{(l)}}(x) \,,
\end{eqnarray*}
\begin{eqnarray*}
Cov \left( \eta |x,D \right) 
&=&\int  \left[ \left(\eta-E \left( \eta |x,D \right)\right) \left(\eta-E \left( \eta |x,D \right)\right)^T \right] \pi(\eta|x,D) \dd \eta  \\
&\approx& \frac{1}{L} \sum_{l=1}^L \int [ \left(\eta_{\widehat{\theta}^{(l)}}(x)-E \left( \eta |x,D \right)\right) \left(\eta_{\widehat{\theta}^{(l)}}(x)-E \left( \eta |x,D \right)\right)^T \\
&+& \left(\eta-\eta_{\widehat{\theta}^{(l)}}(x)\right)\left(\eta_{\widehat{\theta}^{(l)}}(x)-E \left( \eta |x,D \right)\right)^T \\
&+& \left(\eta_{\widehat{\theta}^{(l)}}(x)-E \left( \eta |x,D \right)\right) \left(\eta-\eta_{\widehat{\theta}^{(l)}}(x)\right)^T\\
&+& \left(\eta-\eta_{\widehat{\theta}^{(l)}}(x)\right) \left(\eta-\eta_{\widehat{\theta}^{(l)}}(x)\right)^T ]\ \delta(\eta- \eta_{\widehat{\theta}^{(l)}}(x)) \dd \eta  \\
&=& \frac{1}{L} \sum_{l=1}^L  \left[\left(\eta_{\widehat{\theta}^{(l)}}(x)-E \left( \eta |x,D \right)\right) \left(\eta_{\widehat{\theta}^{(l)}}(x)-E \left( \eta |x,D \right)\right)^T\right] \,. \qed
\end{eqnarray*}

\subsection{Proof of Theorem \ref{T3}}\label{appendix T3}
The objective function to be maximized is the ELBO from equation \eqref{eq3.1}:
\begin{equation*}
ELBO = E_q \log p(D|\theta) - KL \left( q(\theta) \| \pi(\theta) \right)\, .
\end{equation*}

The first term of equation \eqref{eq3.1} is the expectation of the log likelihood (up to a constant):
\begin{equation}
E_q \log p(D|\theta) = - \frac{1}{2L} \sum_{l=1}^L \sum_{i=1}^N  \int N \left( \theta; \whtl, \gamma_l \bI \right) \left(
\frac{ \| y_i-\eta_\theta(x_i) \|^2}{\swhtl(x_i)} + p_y \log(\swhtl(x_i)) \right) \dd\theta, \label{eq7.4}
\end{equation}

where $\Vert . \Vert$ refers to the $L^2$ norm. Note that the parameters of the last layer related to the variances $\swhtl$ of the statistical model \eqref{eq1.0} are fixed after the network training. The parameters $\theta$ of the corresponding mean $\eta_\theta$ are drawn only from the last layer to the output without any nonlinearity. Hence, the network output of the $l$th ensemble member can  be written as
\begin{equation}
\eta_\theta(x) = W_\theta\ \whel(x),\ W_\theta=W_{\hat{\theta}^{(l)}}+\sqrt{\gamma_l}\mathcal{E}\ \in\mathbb{R}^{p_y\times p_{\hat{\eta}}},\ \mathcal{E} .\sim N(0,I),\label{eq7.1}
\end{equation}
where $\whel(x)$ is the output of the next-to-last layer of the trained network. The dependence of $\whel(x)$ on $\theta$ is suppressed, assuming that the corresponding MAP estimates for $\theta$ are taken. $W_\theta$ contains the parameters of the linear unit drawn from a normal distribution around the MAPs $W_{\hat{\theta}^{(l)}}$ with the same elementwise variance $\gamma_l$ for each parameter. Here, the operation ``$.\sim$" means ``elementwise drawn". $\mathcal{E}$ is a matrix with the same dimensionality as $W_{\hat{\theta}^{(l)}}$, i.e. the output dimension $p_y$ times the dimension of the last layer $p_{\hat{\eta}}$.

Using \eqref{eq7.1}, the following equations hold:
\begin{flalign}
&\int N \left( \theta; \whtl, \gamma_l \bI \right)  \| y-\eta_\theta(x) \|^2 \dd\theta \nonumber\\
&= E_\mathcal{E}\left[ \Vert y - (W_{\hat{\theta}^{(l)}} + \sqrt{\gamma_l}\mathcal{E})\hat{\eta}^{(l)}(x)\Vert^2 \right] \\
&= E_\mathcal{E}\left[ 
\Vert y - W_{\hat{\theta}^{(l)}}\hat{\eta}^{(l)}(x)\Vert^2
- 2 (y - W_{\hat{\theta}^{(l)}} \hat{\eta}^{(l)}(x))^T ( \sqrt{\gamma_l}\mathcal{E}\hat{\eta}^{(l)}(x)) 
+ \Vert \sqrt{\gamma_l}\mathcal{E}\hat{\eta}^{(l)}(x)\Vert^2 
\right] \nonumber \\
&= \Vert y -  \eta_{\hat{\theta}^{(l)}}(x)\Vert^2
+ \gamma_l E_\mathcal{E}\left[ \Vert \mathcal{E}\hat{\eta}^{(l)}(x)\Vert^2 \right]\,,\label{eq7.2}
\end{flalign}
where $\eta_{\hat{\theta}^{(l)}}$ is the trained network with MAP estimates. Furthermore, the following holds:
\begin{eqnarray}
E_\mathcal{E}\left[ \Vert \mathcal{E}\whel(x)\Vert^2 \right]
&=& E_\mathcal{E}\left[ (\mathcal{E}\whel(x))^T(\mathcal{E}\whel(x)) \right] \\
&=& \sum_{r=1}^{p_y}\sum_{\alpha,\beta=1}^{p_{\hat{\eta}}} \whel(x)_\alpha E\left[\mathcal{E}_{r,\alpha} \mathcal{E}_{r,\beta} \right] \whel(x)_\beta  \\
&=& \sum_{r=1}^{p_y}\sum_{\alpha=1}^{p_{\hat{\eta}}} \whel(x)_\alpha E\left[\mathcal{E}_{r,\alpha}^2 \right] \whel(x)_\alpha  \\
&=& \sum_{r=1}^{p_y}\sum_{\alpha=1}^{p_{\hat{\eta}}} \whel(x)_\alpha \left( \Var(\mathcal{E}_{r,\alpha}) + E\left[\mathcal{E}_{r,\alpha}\right]^2 \right) \whel(x)_\alpha  \\
&=& p_y \Vert\whel(x)\Vert^2\,. \label{eq7.3}
\end{eqnarray}
Inserting \eqref{eq7.2} and \eqref{eq7.3} into equation \eqref{eq7.4} yields:
\mathleft
\begin{equation}
E_q \log p(D|\theta) = - \frac{1}{2L} \sum_{l=1}^L \sum_{i=1}^N \left(
\frac{\Vert y_i - \eta_{\hat{\theta}^{(l)}}(x)\Vert^2 + \gamma_l p_y \Vert\whel(x)\Vert^2 }{\swhtl(x_i)} + p_y \log(\swhtl(x_i)) \right) . \label{eq7.5}
\end{equation}

The second term of the ELBO \eqref{eq3.1} is the Kullback-Leibler divergence between the approximation of the posterior $q(\theta)$ and the chosen prior $\pi (\theta)$. The following holds:
\mathleft
\begin{eqnarray}
KL \left( q(\theta) \| \pi(\theta) \right)
&=&
\int q(\theta) \log(\frac{q(\theta)}{\pi(\theta)}) \dd \theta \\
&=&
\int \frac{1}{L} \sum_{l=1}^L N \left( \theta; \whtl, \gamma_l \bI \right)
\log \left( \frac{\frac{1}{L} \sum_{r=1}^L N \left( \theta; \whtr, \gamma_r \bI \right)}{\pi(\theta)} \right) \dd \theta \\
&\approx&
\frac{1}{L} \sum_{l=1}^L \int 
N \left( \theta; \whtl, \gamma_l \bI \right)
\log \left( \frac{\frac{1}{L} N \left( \theta; \whtl, \gamma_l \bI \right) }{\pi(\theta) } \right) \dd \theta \\
&=&
\frac{1}{L} \sum_{l=1}^L KL \left(N ( \theta; \whtl, \gamma_l \bI ) \| \pi(\theta) \right) - \frac{1}{L} \sum_{l=1}^L\log(L)\,. \label{eq7.6}
\end{eqnarray}
The approximation
\begin{eqnarray}
&\int  N \left( \theta; \whtl, \gamma_l \bI \right)
\log \left( \sum_{r=1}^L N \left( \theta; \whtr, \gamma_r \bI \right) \right) \dd \theta \nonumber \\
&\approx
\int  N \left( \theta; \whtl, \gamma_l \bI \right)
\log \left( N \left( \theta; \whtl, \gamma_l \bI \right) \right) \dd \theta
\end{eqnarray}
is valid under the assumption that the between-variability of the MAP estimates is large compared to the within-variability, since 
then the first term of the integrand (the normal distribution around $\whtl$) goes sufficiently fast to zero  for distant $\whtr, r\neq l$.

It is well known that the Kullback-Leibler divergence between two Gaussians can be calculated analytically. It follows that:
\begin{eqnarray}
KL \left(N ( \theta; \whtl, \gamma_l \bI ) \| \pi(\theta) \right) &=&
KL \left(N ( \theta; \whtl, \gamma_l \bI ) \| N (\theta; 0,\lambda^{-1}I) \right)  \\
&=& \frac{1}{2} \left( 
p_\theta \gamma_l \lambda + \lambda \Vert\whtl \Vert^2 - p_\theta
- p_\theta \log(\gamma_l \lambda) 
\right),
\end{eqnarray}
where $p_\theta:=dim(\whtl),\ l=1,\ldots,L$. Here, $p_\theta = p_y p_{\hat{\eta}}$, as only the weights $W_\theta$ of the last linear unit of the network are random. It follows with \eqref{eq7.6} that
\begin{equation}
KL \left( q(\theta) \| \pi(\theta) \right)
= \frac{1}{2L} \sum_{l=1}^L
\left( 
p_y p_{\hat{\eta}} \gamma_l \lambda + \lambda \Vert W_{\hat{\theta}^{(l)}} \Vert^2_F - p_y p_{\hat{\eta}}
- p_y p_{\hat{\eta}} \log(\gamma_l \lambda) 
\right)
- \frac{1}{L} \sum_{l=1}^L\log(L)\,, \label{eq7.7}
\end{equation}
where $\| \cdots \|_{F}$ denotes the Frobenius norm.

Now, the ELBO is given by combining \eqref{eq7.5} and \eqref{eq7.7}:
\begin{eqnarray}
ELBO &=& E_q \log p(D|\theta) - KL \left( q(\theta) \| \pi(\theta) \right) \nonumber\\
&=& - \frac{1}{2L} \sum_{l=1}^L \sum_{i=1}^N \left(
\frac{\Vert y_i - \eta_{\hat{\theta}^{(l)}}(x_i)\Vert^2 + \gamma_l p_y \Vert\whel(x_i)\Vert^2 }{\swhtl(x_i)} + p_y \log(\swhtl(x_i)) \right) \nonumber \\
&-& \left( \frac{1}{2L} \sum_{l=1}^L
\left( 
p_y p_{\hat{\eta}} \gamma_l \lambda + \lambda \Vert W_{\hat{\theta}^{(l)}} \Vert^2_F - p_y p_{\hat{\eta}}
- p_y p_{\hat{\eta}} \log(\gamma_l \lambda) 
\right)
- \frac{1}{L} \sum_{l=1}^L\log(L) \right) \nonumber
\end{eqnarray}
\mathleft
\begin{eqnarray}
&=& \frac{1}{L} \sum_{l=1}^L [
\log(L)
- \frac{1}{2} \{ 
\sum_{i=1}^N \left(
\frac{\Vert y_i - \eta_{\hat{\theta}^{(l)}}(x_i)\Vert^2 }{\swhtl(x_i)} + p_y \log(\swhtl(x_i)) \right) 
+
\lambda \Vert W_{\hat{\theta}^{(l)}} \Vert^2_F - p_y p_{\hat{\eta}} \nonumber \\
&-& p_y p_{\hat{\eta}} \log( \lambda) \} - \frac{1}{2} \left(\sum_{i=1}^N \frac{p_y \Vert\whel(x_i)\Vert^2 }{\swhtl(x_i)} + p_y p_{\hat{\eta}} \lambda \right)\gamma_l
+ \frac{1}{2}p_y p_{\hat{\eta}} \log(\gamma_l)
].
\end{eqnarray}
In total, the ELBO takes the form:
\begin{equation}
ELBO = \frac{1}{L} \sum_{l=1}^L 
\left( a_l + b_l \gamma_l + c_l \log(\gamma_l) \right)\,,\label{eq7.8}
\end{equation}
where $a_l, b_l, c_l$ are given analytically in dependence on the (local) MAP estimates $\whtl, l=1, \ldots,L$.
The maximizer of \eqref{eq7.8} w.r.t. the $\gamma_{l}$ is given by
\begin{equation}\label{eq7.9}
\gamma_l = - \frac{c_l}{b_l}
= \frac{ p_{\hat{\eta}}}{\sum_{i=1}^N \frac{\Vert\whel(x_i)\Vert^2 }{\swhtl(x_i)} + p_{\hat{\eta}} \lambda}
\,,\ l=1,\ldots,L\,.      
\end{equation}
Note that $b_l \ne 0$ holds.
It can be expected that $\sum_{i=1}^N \frac{\Vert\whel(x_i)\Vert^2 }{\swhtl(x_i)}$ is unbounded as $N$ grows, since the individual networks are trained independently. Under this assumption,
it follows that $\gamma_l\to 0$ for $N\to\infty$.
$\qed$

\subsection{Proof of Theorem \ref{T4}}\label{appendix T4}
The proof is straightforward in analogy to the proof of Theorem \ref{T2} (cf. Appendix \ref{appendix T2}), with the following addition:
\begin{equation}
    \int \left(\eta-\eta_{\widehat{\theta}^{(l)}}(x)\right) \left(\eta-\eta_{\widehat{\theta}^{(l)}}(x)\right)^T\ \delta(\eta- \eta_\theta(x)) N \left( \theta; \whtl, \gamma_l \bI \right) \dd \theta \dd \eta \approx \gamma_l J_l J_l^T\, ,\ l=1,\ldots,L\,,
\end{equation}
using the first-order Taylor approximation $\eta_\theta (x) \approx \eta_{\widehat{\theta}^{(l)}}(x) + J_l (\theta - \whtl)$
of $\eta_\theta$ for $\theta$ in the vicinity of $\widehat{\theta}^{(l)}$.

The derivative $J_l$ can be easily calculated based on the output of the next-to-last layer of the trained networks (cf. Appendix \ref{appendix T3} \eqref{eq7.1}) and equation \eqref{eq6.1}.
\qed

\end{appendices}
\end{document}